\documentclass[runningheads]{llncs}
\usepackage{graphicx}
\usepackage{amsmath,amssymb,mathtools}
\usepackage{color}
\usepackage[utf8]{inputenc}
\usepackage{color}
\usepackage{times}
\usepackage{enumerate}
\usepackage{subfigure}
\usepackage{algorithm} 
\usepackage{algpseudocode}
\usepackage{booktabs}
\usepackage{tabularx}
\usepackage{array}
\usepackage{bm}
\usepackage{url}
\usepackage{bbm}
\usepackage{enumitem}
\usepackage{wrapfig}
\usepackage{appendix}

\makeatletter
\newcommand{\printfnsymbol}[1]{%
  \textsuperscript{\@fnsymbol{#1}}%
}
\makeatother

\usepackage{pifont}
\usepackage[symbol*]{footmisc}

\DefineFNsymbolsTM{otherfnsymbols}{%
  \textbullet \circ
  \textsection   \mathsection
  \textdagger    \dagger
  \textdaggerdbl \ddagger
  \textasteriskcentered *
  \textbardbl    \|%
  \textparagraph \mathparagraph
}%

\setfnsymbol{otherfnsymbols}
\newcommand{\cmark}{\ding{61}}%
\newcommand{\xmark}{*}%
\newcommand{\zmark}{\ding{67}}%

\renewcommand{\vec}[1]{\mathbf{#1}}

\newcommand{\insertimageStar}[5]{ 
\begin{figure*}[#5]
\centering
\includegraphics[width=#1\linewidth, clip=true]{figures/#2}
\caption{#3}
\label{#4}
\end{figure*}
}

\algnewcommand\algorithmicinput{\textbf{Input:}}
\algnewcommand\INPUT{\item[\algorithmicinput]}
\algnewcommand\algorithmicoutput{\textbf{Output:}}
\algnewcommand\OUTPUT{\item[\algorithmicinput]}

\newcommand{\suchthat}{\;\ifnum\currentgrouptype=16 \middle\fi|\;}

\begin{document}

\title{PPF-FoldNet: Unsupervised Learning of Rotation Invariant 3D Local Descriptors}
\titlerunning{PPF-FoldNet}
\authorrunning{Haowen Deng, Tolga Birdal, Slobodan Ilic}

\author{Haowen Deng \textsuperscript{\cmark\,\zmark\,\xmark} \qquad  Tolga Birdal \textsuperscript{\cmark\,\zmark} \qquad Slobodan Ilic \textsuperscript{\cmark\,\zmark}}

\institute{\textsuperscript{\cmark}\,\,\,Technische Universit\"{a}t M\"{u}nchen, Germany \quad
\textsuperscript{\zmark}\,\,\,Siemens AG, Germany\\
\textsuperscript{\xmark}\,\,\,National University of Defense Technology, China
}

\maketitle

\begin{abstract}
We present PPF-FoldNet for unsupervised learning of 3D local descriptors on pure point cloud geometry. Based on the folding-based auto-encoding of well known point pair features, PPF-FoldNet offers many desirable properties: it necessitates neither supervision, nor a sensitive local reference frame, benefits from point-set sparsity, is end-to-end, fast, and can extract powerful rotation invariant descriptors. Thanks to a novel feature visualization, its evolution can be monitored to provide interpretable insights. Our extensive experiments demonstrate that despite having six degree-of-freedom invariance and lack of training labels, our network achieves state of the art results in standard benchmark datasets and outperforms its competitors when rotations and varying point densities are present. PPF-FoldNet achieves $9\%$ higher recall on standard benchmarks, $23\%$ higher recall when rotations are introduced into the same datasets and finally, a margin of $>35\%$ is attained when point density is significantly decreased.
\keywords{3D deep learning, local features, descriptors, rotation invariance}
\end{abstract}

\section{Introduction}
Local descriptors are one of the essential tools used in computer vision, easing the tasks of object detection, pose estimation, SLAM or image retrieval~\cite{lowe1999object,mur2015orb}. While being well established in the $2$D domain, $3$D local features are still known to lack good discriminative power and repeatibility. With the advent of deep learning, many areas in computer vision shifted from hand crafted labor towards a problem specific end-to-end learning. Local features are of course no exception. Already in $2$D, learned descriptors significantly outperform their engineered counterparts~\cite{yi2016lift,Noh_2017_ICCV}. Thus, it was only natural for the scholars to tackle the task of $3$D local feature extraction employing similar approaches~\cite{Khoury_2017_ICCV,zeng20163dmatch,ppfnet}. However, due to the inherent ambiguities and less informative nature of sole geometry, extracting $3$D descriptors on point sets still poses an unsolved problem, even for learning-based methods.

Up until now, deep learning of local features in $3$D has suffered from one or more of the following: \textbf{a}) being supervised and requiring an abundant amount of labels in form of pairs, triplets or $N$-tuples~\cite{zeng20163dmatch,ppfnet}, \textbf{b}) being sensitive to $6$DoF rotations~\cite{zeng20163dmatch,ppfnet}, \textbf{c}) involving significant hand-crafted input preparation~\cite{Khoury_2017_ICCV} and \textbf{d}) unsatisfactory performance~\cite{Khoury_2017_ICCV,qi2016pointnet}. In this paper, we map out an elegant architecture to tackle all of these problems and present PPF-FoldNet: an unsupervised, high-accuracy, $6$DoF transformation invariant, sparse and fast $3$D local feature learning network. PPF-FoldNet operates directly on point sets, taking into account the point sparsity and permutation invariant set property, deals well with density variations, while significantly outperforming its rotation-variant counterparts  even based on the standard benchmarks. 

Our network establishes theoretical rotation invariance inspired by use a point pair feature (PPF)~\cite{birdal2017cad,birdal3dv2015,ppfnet} encoding of the local $3$D geometry into patches. In contrast to PPFNet \cite{ppfnet}, we do not incorporate the original points or normals into the encoding. The collection of these $4$D PPFs are then sent to a FoldingNet-like end to end auto-encoder (AE)~\cite{foldingnet}, trained to auto-reconstruct the PPFs, using a set distance. Our encoder is simpler than in FoldingNet and for decoding, we propose a similar folding scheme, where a low dimensional $2$D grid lattice is folded onto a $4$D PPF space and monitor the network evolution by a novel lossless visualization of the PPF space. Our overall architecture is based on PointNet~\cite{qi2016pointnet} to achieve permutation invariance and to fully utilize the sparsity. Training our AE is far easier than training, for example, $3$DMatch~\cite{zeng20163dmatch}, because we do not need to sample pairs or triplets from a pre-annotated large dataset and we benefit from linear time complexity to the number of patches.
\insertimageStar{1}{pipeline_cropped.pdf}{PPF-FoldNet: The point pair feature folding network. The point cloud local patches are first converted into PPF representations, and then sent into the encoder to get compressed codewords. The decoder tries to reconstruct full PPFs from these codewords by folding. This forces the codewords to keep the most critical and discriminative information. The learned codewords are proven to be robust and effective as we will show across extensive evaluations.}{fig:teaser}{t!}

Extensive evaluations demonstrate that PPF-FoldNet outperforms the state of the art across the standard benchmarks in which severe rotations are avoided. When arbitrary rotations are introduced into the input, our descriptors outperform related approaches by a large margin including even the best competitor, Khoury et al.'s CGF~\cite{Khoury_2017_ICCV}. Moreover, we report better performance as the input sparsifies, as well as good generalization properties. Our qualitative evaluations will uncover how our network operates and give valuable interpretations. In a nutshell, our contributions can be summarized as:
\begin{itemize}
\item An auto-encoder, that unifies a PointNet encoding with a FoldingNet decoder,
\item Use of well established $4$D PPFs in this modified auto-encoder to learn rotation invariant $3$D local features without supervision.
\item A novel look at the invariance of point pair features and derived from it, a new way of visualizing PPFs and monitoring the network progress.
\end{itemize}
\section{Prior Art}
Following their hand-crafted counterparts~\cite{shot,fpfh,usc,spin,rops}, $3$D deep learning methods started to enjoy a deep-rooted history. Initial attempts to learn from $3$D data used the naive dense voxel grid representation~\cite{zeng20163dmatch,wu20153d,maturana2015voxnet,hackel2017isprs}. While being straightforward extensions of $2$D architectures, such networks did not perform as efficiently and robustly as $2$D CNNs~\cite{imagenet}. Hence, they are superseded by networks taking into account the spatial sparsity by replacing the dense grids with octrees~\cite{octnet,octgennet,ocnn} or kd-trees~\cite{kdtreenet}. 

Another family of works acknowledges that $3$D surfaces live on $2$D submanifolds and seek to learn projections rather than the space of actual input. A reduction of dimension to two makes it possible to benefit from developments in $2$D CNNs such as Res-Nets~\cite{resnet}: LORAX~\cite{Elbaz_2017_CVPR} proposes a \textit{super-point} to depth map projection. Kehl et al.~\cite{kehl2016deep} operate on the RGB-D patches that are natural projections onto the camera plane. Huang et al.~\cite{huangKCCKY17} anchor three local cameras to each $3$D keypoint and collect multi-channel projections to learn a semi-global representation. Cao et al.~\cite{cao20173d} use spherical projections to aid object classification. Tatarchenko et al. propose convolutions in the tangent space as a way of operating on the local $2$D projection~\cite{Tat18}.
 
Point clouds can be treated as graphs by associating edges among neighbors. This paves the way to the appliance of graph convolutional networks~\cite{manessi2017dynamic}. FoldingNet~\cite{foldingnet} employs graph-based encoding layers. Wang et al.~\cite{graphnetwork} tackle the segmentation tasks on point sets via graph convolutions networks (GCNs), while Qi et al.~\cite{Qi_2017_ICCV} apply GCNs to RGB-D semantic segmentation. While showing a promising direction, the current efforts involving graphs on $3$D tasks are still supervised, try to imitate CNNs and cannot really outperform their unstructured point-processing counterparts.

Despite all developments in $3$D deep learning, there are only a handful of methods that explicitly learn generic local descriptors on $3$D data. One of the first methods that learns $3$D feature matching, also known as correspondence, is $3$DMatch~\cite{zeng20163dmatch}. It uses dense voxel grids to summarize the local geometry and learning is performed via contrastive loss. $3$DMatch is weakly supervised by task, does not learn generic descriptors, and is not invariant to rotations. PointNet~\cite{qi2016pointnet} and PointNet++~\cite{qi2017pointnet++} work directly on the unstructured point clouds and minimize a multi-task loss, resulting in local and global features. Similar to~\cite{zeng20163dmatch}, invariance is not of concern and weak supervision is essential. CGF~\cite{Khoury_2017_ICCV} combines a hand-crafted input preparation with a deep dimensionality-reduction and still uses supervision. However, the input features are not learned but only the embedding. PPFNet~\cite{ppfnet} improves over all these methods by incorporating global context, but still fails to achieve full invariance and expects supervision.

\subsection{Background}
From all of the aforementioned developments, we will now pay particular attention to three: PointNet, FoldingNet and PPFNet which combined, give our network its name. 

\paragraph{PointNet~\cite{qi2016pointnet}} Direct consumption of unstructured point input in the form of a set within deep networks began by PointNet. Qi et al. proposed to use a point-wise multi layer perceptron (MLP) and aggregated individual feature maps into a global feature by a permutation-invariant max pooling. Irrespective of the input ordering, PointNet can generate per-point local descriptors as well as a global one, which can be combined to solve different problems such as keypoint extraction, $3$D segmentation or classification. While not being the most powerful network, it clearly sets out a successful architecture giving rise to many successive studies~\cite{qi2017pointnet++,qi2017frustum,achlioptas2017,shen2017neighbors}.

\paragraph{FoldingNet~\cite{foldingnet}} While PointNet can work with point clouds, it is still a supervised architecture, and constructing unsupervised extensions like an auto-encoder on points is non-trivial as the upsampling step is required to interpolate sets~\cite{yu2018pu,qi2017pointnet++}. Yang et al. offer a different perspective and instead of resorting to costly voxelizations~\cite{wu2016learning}, propose \textit{folding}, as a strong decoder alternative. Folding warps an underlying low-dimensional grid towards a desired set, specifically a $3$D point cloud. Compared to other unsupervised methods, including GANs~\cite{wu2016learning}, FoldingNet achieves superior performance in common tasks such as classification and therefore, in PPF-FoldNet we benefit from its decoder structure, though in a slightly altered form.

\paragraph{PPFNet~\cite{ppfnet}} proposes to learn local features informed by the global context of the scene. To do so, an $N$-tuple loss is designed, seeking to find correspondences jointly between all patches of two fragments. Features learned in this way are shown to be superior than prior methods and PPFNet is reported to be the state-of-the-art local feature descriptor. However, even if Deng et al. stress the importance of learning permutation and rotation invariant features, the authors only manage to improve the resilience to Euclidean isometries slightly by concatenating PPF to the point set. Moreover, the proposed N-tuple loss still requires supervision. Our work improves on both of these aspects: It is capable of using PPFs only and operating without supervision.

\section{PPF-FoldNet}
PPF-FoldNet is based on the idea of auto-encoding a rotation invariant but powerful representation of the point set (PPFs), such that the learned low dimensional embedding can be truly invariant. This is different to training the network with many possible rotations of the same input and forcing the output to be a canonical reconstruction. The latter would both be approximate and much harder to learn. Input to our network are local patches which, unlike PPFNet, are individually auto-encoded. The latent low dimensional vector of the auto-encoder, \textit{codeword}, is used as the local descriptor attributed to the point around which the patch is extracted. 
\subsection{Local Patch Representation} 
\label{sec:ppf}
Our input point cloud is a set of oriented points $\textbf{X} = \{\vec{x}_i \in \mathbb{R}^6\}$, meaning that each point is decorated with a local normal (e.g. tangent space) $\vec{n}\in \mathbb{R}^3$: $\vec{x}=\{\vec{p}, \vec{n}\} \in \mathbb{R}^6$. A local patch is a subset of the input $\vec{\Omega}_{\vec{x}_r} \subset \textbf{X}$ center around a reference point $\vec{x}_r$. We then encode this patch as a collection of pair features, computed between a central reference and all the other points:
\begin{equation}
\vec{F}_{\vec{\Omega}} = \{\,\vec{f}(\vec{x}_r, \vec{x}_1)  \cdots  \vec{f}(\vec{x}_r, \vec{x}_i)  \cdots \vec{f}(\vec{x}_r, \vec{x}_{N})\, \} \in \mathbb{R}^{4 \times N-1},\, i \neq r
\end{equation} 
The features between any pair (point pair features) are then defined to be a map $\vec{f}: \mathbb{R}^{12} \rightarrow \mathbb{R}^4$ sending two oriented points to three angles and the pair distance:
\begin{align}
\vec{f}: (\vec{x}_r^T,\vec{x}_i^T)^T \rightarrow (\angle(\vec{n}_r,\vec{d}), \angle(\vec{n}_i,\vec{d}), \angle(\vec{n}_r,\vec{n}_i), \Vert \vec{d} \rVert_2)^T
\end{align}
$\vec{d}=\vec{p}_r-\vec{p}_i$. An angle computation for non-normalized vectors is given in~\cite{birdal3dv2015}. Such encoding of the local geometry resembles that of PPFNet~\cite{ppfnet}, but differs in the fact that we ignore the points and normals as they are dependent on the orientation and local reference frame. We instead use pure point pair features, thereby avoiding a canonical frame computation. Note that the dimensionality of this feature is still irreducible without data loss. 
\begin{proposition}
\label{thm:ppf}
PPF representation $\vec{f}$ around $\vec{x}_r$ explains the original oriented point pair up to a rotation and reflection about the normal of the reference point. 
\end{proposition}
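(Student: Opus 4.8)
\emph{Approach.} The plan is to invert $\vec{f}(\vec{x}_r,\cdot)$ explicitly from the four recorded scalars and to read off which degrees of freedom survive. Treat $\vec{x}_r$ as given and ask how much of $\vec{x}_i$ is determined by $\vec{f}(\vec{x}_r,\vec{x}_i)$. Every entry of $\vec{f}$ is either the length $\norm{\vec{d}}_2$ or an unsigned angle among the translation-invariant vectors $\vec{n}_r,\vec{n}_i,\vec{d}$, so $\vec{f}$ is unchanged when a rigid motion is applied to $\vec{x}_r$ and $\vec{x}_i$ at once; I may therefore fix the canonical pose $\vec{p}_r=\vec{0}$, $\vec{n}_r=\vec{e}_3=(0,0,1)^T$. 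The orthogonal maps that fix this $\vec{x}_r$, and hence act on $\vec{x}_i$ without altering $\vec{f}$, are exactly $O(2)=\{g\in O(3):g\,\vec{e}_3=\vec{e}_3\}$ — rotations about $\vec{n}_r$ together with reflections through planes containing $\vec{n}_r$. It remains to prove that this is the \emph{whole} ambiguity: any oriented point $\vec{x}_i$ with the prescribed $\vec{f}$ and the canonical $\vec{x}_r$ lies in a single $O(2)$-orbit.

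\emph{Reconstruction.} Set $\alpha=\angle(\vec{n}_r,\vec{d})$, $\beta=\angle(\vec{n}_i,\vec{d})$, $\gamma=\angle(\vec{n}_r,\vec{n}_i)$. From $\norm{\vec{d}}_2$ and $\alpha$ one gets $\vec{d}=\norm{\vec{d}}_2(\sin\alpha\cos\phi,\sin\alpha\sin\phi,\cos\alpha)^T$ with the azimuth $\phi$ free; hence $\vec{p}_i=\vec{p}_r-\vec{d}$ is determined only up to rotation about $\vec{n}_r$, which is the $SO(2)$ part of the orbit. Freezing $\phi$ (say $\phi=0$), the identities $\vec{n}_i\cdot\vec{e}_3=\cos\gamma$ and $\vec{n}_i\cdot(\vec{d}/\norm{\vec{d}}_2)=\cos\beta$ are two linear equations that yield the third and first coordinates of $\vec{n}_i$, after which $\norm{\vec{n}_i}_2=1$ fixes the second coordinate up to sign. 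The two signs are interchanged by the reflection across $\operatorname{span}(\vec{n}_r,\vec{d})$, a plane containing the reference normal — this supplies the reflection part of the orbit. Thus the set of admissible $\vec{x}_i$ is precisely one orbit of the $O(2)$ above, which is the claim.

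\emph{Loose ends and the hard part.} For the converse inclusion one checks that every $g\in O(2)$ fixing $\vec{n}_r$ preserves $\norm{\vec{d}}_2$ and the three angles (immediate, since $g$ is an isometry with $g\,\vec{n}_r=\vec{n}_r$), so it maps admissible configurations to admissible ones; combined with the reconstruction this shows the fibre of $\vec{f}(\vec{x}_r,\cdot)$ is exactly the $O(2)$-orbit. I also have to dispatch the degenerate poses separately: when $\vec{d}\parallel\vec{n}_r$ (i.e.\ $\sin\alpha=0$) the vector $\vec{d}$ is already rigid but $\beta=\gamma$ is forced and $\vec{n}_i$ ranges over a full circle — still an $SO(2)$-orbit, hence consistent with the statement — and similarly when $\vec{n}_i\parallel\vec{n}_r$ ($\sin\gamma=0$), where the formula dividing by $\sin\alpha$ or $\sin\gamma$ must be replaced. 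I expect the algebra in the reconstruction to be routine; the real care is the group bookkeeping — verifying that the azimuthal freedom of $\vec{d}$ and the sign freedom of $\vec{n}_i$ together generate exactly $O(2)$ (rotation and reflection about $\vec{n}_r$) and nothing larger.
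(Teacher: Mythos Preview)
Your argument is correct and, if anything, more careful than the paper's own proof. The route, however, is genuinely different. The paper proceeds via the Gram matrix: it assembles $\vec{A}=[\vec{n}_1\;\vec{n}_2\;\vec{d}_n]$, observes that the PPF entries populate $\vec{K}=\vec{A}^T\vec{A}$, and then appeals to the standard fact that $\vec{A}$ is recoverable from $\vec{K}$ only up to a left orthogonal factor $\vec{R}\in O(3)$; fixing $\vec{R}$ so that $\vec{n}_r\mapsto\vec{e}_3$ leaves exactly the $O(2)$ stabiliser of $\vec{e}_3$ as residual freedom. You instead fix the canonical frame first and reconstruct $\vec{d}$ and $\vec{n}_i$ coordinate-by-coordinate, reading off the azimuthal $SO(2)$ freedom in $\vec{d}$ and the $\pm$ sign in the out-of-plane component of $\vec{n}_i$ as the rotation and reflection parts. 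What the Gram-matrix route buys is brevity and a clean separation of ``recover the triple up to $O(3)$'' from ``pin down $\vec{n}_r$''; what your explicit reconstruction buys is a transparent identification of \emph{which} two parameters are lost, plus honest bookkeeping of the degenerate cases ($\vec{d}\parallel\vec{n}_r$, $\vec{n}_i\parallel\vec{n}_r$) that the paper elides.
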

\begin{proof}
Let us consider two oriented points $\vec{x}_1$ and $\vec{x}_2$. We can always write the components of the associated point pair feature $\vec{f}(\vec{x}_1,\vec{x}_2)$ as follows:
\begin{equation}
\vec{n}_1^T\vec{n}_2 = f_1 \,\qquad\,
\vec{n}_1^T\vec{d}_n = f_2 \,\qquad\,
\vec{n}_2^T\vec{d}_n = f_3
\end{equation}
where $\vec{d}_n=\vec{d}/\|\vec{d}\|$. We now try to recover the original pair given its features. First, it is possible to write:
\begin{equation}
\label{eq:AAT}
\begin{bmatrix} \quad\vec{n}^T_1\quad\, \\ \quad\vec{n}^T_2\quad\, \\ \quad\vec{d}^T_n\quad\,
\end{bmatrix}
\begin{bmatrix} \vec{n}_1 & \quad\vec{n}_2 & \quad\vec{d}_n
\end{bmatrix}=
\begin{bmatrix} 1 & f_1 & f_2 \\ f_1 & 1 & f_3 \\ f_2 & f_3 & 1
\end{bmatrix}
\end{equation}
given that all vectors are of unit length. In matrix notation, Eq.~\ref{eq:AAT} can be written as $\vec{A}^T\vec{A} = \vec{K}$. Then, by singular value decomposition, $\vec{K}=\vec{U}\vec{S}\vec{V}^T$ and thus $\vec{A}=\vec{U}{\vec{S}^{1/2}}\vec{V}^T$. Note that, any orthogonal matrix (rotation and reflection) $\vec{R}$ can now be applied to $\vec{A}$ without changing the outcome: $(\vec{RA})^T{\vec{RA}}=\vec{A}^T\vec{R}^T\vec{R}\vec{A}=\vec{A}^T\vec{A}=\vec{K}$. Hence, such decomposition is up to finite-dimensional linear isometries: rotations and reflections.
Since we know that the local patch is centered at the reference point $\vec{p}_r=\vec{0}$, we are free to choose an $\vec{R}$ such that the normal vector of $\vec{p}_r$ ($\vec{n}_r$) is aligned along one of the canonical axes, say $+\vec{z}=[0,0,1]^T$ (freely chosen): 
\begin{align}
\vec{R} = \vec{I} + [\vec{v}]_x+[\vec{v}_x]^2\frac{1-n_r^z}{\|v\|}
\end{align}
where $\vec{v} = \vec{n}_r \times \vec{z}$, $n_r^z$ is the $z $ component of $\vec{n}_r$ and $\vec{I}$ is identity. $[\cdot]_x$ denotes skew symmetric cross product matrix. Because now $\vec{R}\vec{n}_r = \vec{z}$, any rotation $\theta$ and reflection $\phi$ about $\vec{z}$ would result in the same vector $\vec{z} = \vec{R}_z(\theta,\phi)\vec{z}, \, \forall \theta,\phi \in \mathbb{R}$. Any paired point can then be found in the canonical frame, uniquely up to two parameters as $\vec{p}_r \gets \|\vec{d}\| \vec{R}_z(\theta.\phi)\vec{R}\vec{d}_n$, $\vec{n}_r \gets \vec{R}_z(\theta,\phi)\vec{R}\vec{n}_r$. \qed
\end{proof}
In the case where reflections are ignored (as they are unlikely to happen in a $3$D world), this leaves a single degree of freedom, rotation angle around the normal. Also note once again that for the given local representation, the reference point $\vec{p}_r$ is common to all the point pairs.

\paragraph{Visualizing PPFs} 
PPFs exist in a $4$D space and thus it is not trivial to visualize them. While simple solutions such as PCA would work, we prefer a more geometrically meaningful and simpler solution.
Proposition \ref{thm:ppf} allows us to compute a signature of a set of point pairs by orienting the vectors ($\vec{n}_1, \vec{n}_2, \vec{d}$) individually for all points in order to align the difference vectors $\{\vec{d}_i\}$ with the $x-z$ plane by choosing an appropriate $\vec{R}_z(\theta.\phi)$. Such a transformation would not alter the features as shown. In this way, the paired points can be transformed onto a common plane (image), where the location is determined by the difference vector, in polar coordinates. The normal of the second point would not lie in this plane but can be encoded as colors in that image. Hence, it is possible to obtain a $2$D visualization, without any data loss, i.e. all components of the vector contribute to the visualization. In Fig.~\ref{fig:ppf_vis} we provide a variety of local patch and PPF visualizations from the datasets of concern.
\insertimageStar{1}{ppf_vis_cropped.pdf}{Visualisation of some local patches and their correspondent PPF Signatures.}{fig:ppf_vis}{t!}

\subsection{PPF Auto-Encoder and Folding} 
PPF-FoldNet employs a PointNet-like encoder with skip-links and a FoldingNet-like decoding scheme. It is designed to operate on $4$D-PPFs, as summarized in Fig.~\ref{fig:teaser}.

\paragraph{Encoder} The input to our network, and thus to the encoder, is $\vec{F}_\vec{\Omega}$, a local PPF representation, as in §\ref{sec:ppf}. A three-layer, point-wise MLP (Multi Layer Perceptron) follows the input layer and subsequently a max-pooling is performed to aggregate the individual features into a global one, similar to PointNet~\cite{qi2016pointnet}. The low level features are then concatenated with this global feature using skip-links. This results in a more powerful representation. Another two-layer MLP finally redirects these features to a final encoding, the codeword, which is of dimension 512.
\begin{proposition}
\label{thm:permutation}
The encoder structure of PPF-FoldNet is permutation invariant. 
\end{proposition}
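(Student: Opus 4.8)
The plan is to decompose the encoder into a short pipeline of elementary building blocks and to track how each block behaves under a relabelling of the input point pair features. Write the input as the collection $\vec{F}_\vec{\Omega} = \{\vec{f}_1,\dots,\vec{f}_{N-1}\}$ with each $\vec{f}_i\in\mathbb{R}^4$, and let $\pi$ be an arbitrary permutation of $\{1,\dots,N-1\}$; permutation invariance of the encoder means $\mathrm{Enc}(\{\vec{f}_{\pi(i)}\}) = \mathrm{Enc}(\{\vec{f}_i\})$. As described in \S3.2, the encoder is built from three kinds of operations: (i) a shared-weight (point-wise) MLP $h$ applied independently to each $\vec{f}_i$; (ii) a channel-wise max-pooling $M$ that collapses a collection of vectors into a single global vector; and (iii) the skip-link step, which concatenates every point-wise feature with the pooled global vector. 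I would prove the proposition by stating a one-line lemma for each of these and then chaining them.

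First I would note that any point-wise map is permutation-\emph{equivariant}: applying $h$ elementwise to $\{\vec{f}_{\pi(i)}\}$ yields $\{h(\vec{f}_{\pi(i)})\}$, which is just $\{h(\vec{f}_i)\}$ reindexed by $\pi$. This handles both the initial three-layer MLP and the later two-layer MLP. Second, channel-wise max-pooling is permutation-\emph{invariant}: since $\max$ over a set of scalars is commutative and associative, taking the per-channel maximum over $\{h(\vec{f}_i)\}$ gives the same vector regardless of ordering, so the global feature $\vec{m}=M(\{h(\vec{f}_i)\})$ is literally unchanged under $\pi$. (Nothing here uses the specific choice of $\max$; any symmetric aggregator works verbatim.)

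Then I would combine these. The skip-link produces $\vec{z}_i = [\,h(\vec{f}_i)\,;\,\vec{m}\,]$: its first component is reindexed by $\pi$ (equivariance) while its second component $\vec{m}$ is fixed (invariance), so the family $\{\vec{z}_i\}$ is simply reindexed by $\pi$; i.e. broadcasting a permutation-invariant quantity onto a permutation-equivariant family preserves equivariance. Feeding $\{\vec{z}_i\}$ through the second point-wise MLP keeps equivariance, and the final channel-wise pooling that emits the $512$-dimensional codeword destroys the reindexing and returns an invariant vector. A composition of equivariant maps followed by a symmetric aggregation is invariant, which is exactly the claim.

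I expect the only delicate point — hardly an obstacle — to be the bookkeeping around the skip-link: one must make explicit that concatenating the invariant global feature onto each equivariant per-point feature keeps the intermediate representation equivariant, and that this equivariance is ultimately annihilated by the terminal symmetric pooling before the codeword is read off. Accordingly, I would phrase the whole argument at the level of "point-wise MLP $\to$ symmetric pool $\to$ broadcast-concat $\to$ point-wise MLP $\to$ symmetric pool" rather than unrolling the individual layers, since at that granularity the conclusion is immediate and robust to the exact layer widths.
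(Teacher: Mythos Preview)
Your proposal is correct and follows essentially the same approach as the paper's own sketch: decompose the encoder into point-wise MLPs and max-pooling, observe that each block is permutation-equivariant or invariant, and conclude by composition. Your treatment is in fact more explicit than the paper's, which merely cites PointNet and FoldingNet for the building-block facts and the composition claim; in particular, your careful handling of the skip-link step (broadcasting an invariant global vector onto equivariant per-point features preserves equivariance, which the terminal pooling then collapses to invariance) spells out a detail the paper glosses over.
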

\renewcommand*{\proofname}{Sketch of the proof}
\begin{proof}
The encoder is composed of per-data-point functions (MLP), RELU layers and max-pooling, all of which either do not affect the point order or are individually shown to be permutation invariant~\cite{qi2016pointnet,foldingnet}. Moreover, it is shown that composition of functions is also invariant~\cite{foldingnet} and so is our encoder. We refer the reader to the references for further details. \qed
\end{proof}
In summary, altering the order of the PPF set will not affect the learned representation.

\paragraph{Decoder} Our decoder tries to reconstruct the whole set of point PPFs using a single codeword, which in return, also forces the codeword to be informative and distill the most distinctive information from the high-dimensional input space. However, inspired by FoldingNet, instead of trying to upsample or interpolate point sets, the decoder will try to deform a low-dimensional grid structure guided by the codeword. Each grid point is concatenated to a replica of the codeword, resulting in an $M \times 514$ vector as input to what is referred as \textit{folding operation}~\cite{foldingnet}. Folding can be a highly non-linear operation and is thus performed by two consecutive MLPs: the first folding results in a deformed grid, which is appended once again to the codewords and propagates through the second MLP, reconstructing the input PPFs. Moreover, in contrast to FoldingNet~\cite{foldingnet}, we try to reconstruct a higher dimensional set, $4$D vs $3$D ($2$D manifold); we are better off using a deeper MLP - $5$-layer as opposed to the $3$-layer of~\cite{foldingnet}.

Other than simplifying and strengthening the decoding, the folding is also beneficial in making the network interpretable. For instance, it is possible to monitor the grid during subsequent iterations and envisage how the network evolves. To do so, §\ref{sec:qualeval} will trace the PPF sets by visualizing them as described in §\ref{sec:ppf}. 

\paragraph{Chamfer Loss} Note that as size of the grid $M$, is not necessarily the same as the size of the input $N$, and the correspondences in $4$D PPF space are lost when it comes to evaluating the loss. This requires a distance computation between two unequal cardinality point pair feature sets, which we measure via the well known Chamfer metric:
\begin{equation}
d(\vec{F},\vec{\hat{F}}) = \text{max}\Bigg\{ \frac{1}{|\vec{F}|}\sum\limits_{\vec{f} \in \vec{F}} \min_{\vec{\hat{f}}\in \vec{\hat{F}}} \| \vec{f}-\vec{\hat{f}} \|_2, \,\, \frac{1}{|\vec{\hat{F}}|}\sum\limits_{\vec{f} \in \vec{\hat{F}}} \min_{\vec{f}\in\vec{F}} \| \vec{f}-\vec{\hat{f}} \|_2 \Bigg\}
\end{equation}
where $\,\hat{}\,$ operator refers to the reconstructed (estimated) set. 

\paragraph{Implementation details}
PPF-FoldNet uses Tensorflow framework \cite{abadi2016tensorflow}. The initial values of all variables are initialized randomly by Xavier's algorithm. Global loss is minimized with an ADAM optimizer~\cite{kingma2014adam}. Learning rate starts at $0.001$ and exponentially decays after every $10$ epochs, truncated at $0.0001$. We use batches of size $32$. 
\section{Experimental Evaluation}
\subsection{Datasets and Preprocessing} To fully drive the network towards learning varieties of local $3$D geometries and gain robustness to different noises present in real data, we use the $3$DMatch Benchmark Dataset~\cite{zeng20163dmatch}. This dataset is a large ensemble of the existing ones such as Analysis-by-Synthesis \cite{valentin2016learning}, $7$-Scenes \cite{shotton2013scene}, SUN$3$D \cite{xiao2013sun3d}, RGB-D Scenes v.2 \cite{lai2014unsupervised} and Halber and Funkhouser \cite{halber2016structured}. It contains $62$ scenes in total, and we reserve $54$ of them for training and validation. $8$ are for benchmarking. $3$DMatch already provided fragments fused from $50$ consecutive depth frames of the $8$ test scenes, and we follow the same pipeline to generate fragments from the training scenes. Test fragments lack the color information and therefore we resort to using only the $3$D shape. This also makes our network insensitive to illumination changes.

Prior to operation, we downsample the fused fragments with spatial uniformity~\cite{birdal2017sampling} and compute surface normals using~\cite{Hoppe1992} in a $17$-point neighborhood. A reference point and its neighbors within $30$ cm vicinity form a local patch. The number of points in a local patch is thus flexible, which makes it difficult to organize data into regular batches. To facilitate training as well as to increase the representation robustness to noises and different point densities, each local patch is down-sampled. For a fair comparison with other methods in the literature, we use $2048$ points, but also provide an extended version that uses 5K since we are not memory bound, as for example, PPFNet~\cite{ppfnet} is. The preparation stage ends with the PPFs calculated for the assembled local patches.

\begin{table*}[t!]
  \centering
  \caption{Our results on the standard 3DMatch benchmark. \textit{Red Kitchen} data is from 7-scenes~\cite{shotton2013scene} and the rest imported from SUN3D~\cite{xiao2013sun3d}.}
  \resizebox{\textwidth}{!}{
    \begin{tabular}{lcccccccccc}
    \toprule
          & Spin Image~\cite{spin} & SHOT~\cite{shot}  & FPFH~\cite{fpfh}    &
          3DMatch~\cite{zeng20163dmatch} &
          CGF~\cite{Khoury_2017_ICCV} &
          PPFNet~\cite{ppfnet} &
          FoldNet~\cite{foldingnet} & 
          Ours &
          Ours-5K \\
    \midrule
    Kitchen & 0.1937 & 0.1779 & 0.3063  & 0.5751 & 0.4605 & \textbf{0.8972} &  0.5949 & 0.7352 & 0.7866 \\
    Home 1 & 0.3974 & 0.3718 & 0.5833  & 0.7372 & 0.6154 & 0.5577 & 0.7179 & 0.7564 & \textbf{0.7628} \\
    Home 2 & 0.3654 & 0.3365 & 0.4663  & \textbf{0.7067} & 0.5625 & 0.5913 & 0.6058 & 0.6250 & 0.6154 \\
    Hotel 1 & 0.1814 & 0.2080 & 0.2611 &  0.5708 & 0.4469 & 0.5796 & 0.6549 & 0.6593 & \textbf{0.6814} \\
    Hotel 2 & 0.2019 & 0.2212 & 0.3269 & 0.4423 & 0.3846 & 0.5769 & 0.4231 & 0.6058 & \textbf{0.7115} \\
    Hotel 3 & 0.3148 & 0.3889 & 0.5000 & 0.6296 & 0.5926 & 0.6111 & 0.6111 & 0.8889 & \textbf{0.9444} \\
    Study & 0.0548 & 0.0719 & 0.1541  & 0.5616 & 0.4075 & 0.5342 & \textbf{0.7123} & 0.5753 & 0.6199 \\
    MIT Lab & 0.1039 & 0.1299 & 0.2727  & 0.5455 & 0.3506 & \textbf{0.6364} &  0.5844 & 0.5974 & 0.6234 \\
    \midrule
    Average & 0.2267 & 0.2382 & 0.3589  & 0.5961 & 0.4776 & 0.6231 & 0.6130 & 0.6804 & \textbf{0.7182} \\
    \bottomrule
    \end{tabular}%
  \label{tab:3dmatchbenchmark}%
  }
\end{table*}%

\subsection{Accuracy Assessment Techniques} 

Let us assume that a pair of fragments $\vec{P} = \{\vec{p}_i \in \mathbb{R}^3\}$ and $\vec{Q} = \{\vec{q}_i\in \mathbb{R}^3\}$ are aligned by an associated rigid transformation $\vec{T}\in SE(3)$, resulting in a certain overlap. We then define a non-linear feature function $g(\cdot)$ for mapping from input points to feature space, and in our case, this summarizes the PPF computation and encoding as a codeword. The feature for point $\vec{p}_i$ is $g(\vec{p}_i)$, and $g(\vec{P})$ is the pool of features extracted for the points in $\vec{P}$. To estimate the rigid transformation between $\vec{P}$ and $\vec{Q}$, the typical approach finds a set of matching pairs in each fragment and associates the correspondences. The inter point pair set $\vec{M}$ is formed by the pairs $(\vec{p},\vec{q})$ that lie mutually close in the feature space by applying nearest neighbor search $NN$:
\begin{align}
\vec{M} = \{\{\vec{p}_i, \vec{q}_i\},\, &g(\vec{p}_i) = NN(g(\vec{q}_i),g(\vec{P})), \,g(\vec{q}_i) = NN(g(\vec{p}_i),g(\vec{Q}))\,\}
\end{align}
True matches set $\vec{M}_{gnd}$ is the set of point pairs with a Euclidean distance below a threshold $\tau_1$ under ground-truth transformation $\vec{T}$ .
\begin{equation}
\vec{M}_{gnd} = \{\{\vec{p}_i, \vec{q}_i\}:(\vec{p}_i, \vec{q}_i) \in \vec{M}, ||\vec{p}_i - \vec{T}\vec{q}_i||_2 < \tau_1 \}
\end{equation}
We now define an inlier ratio for $\vec{M}$ as the percentage of true matches in $\vec{M}$ as $r_{in} = {|\vec{M}_{gnd}|}/{|\vec{M}|}$.
To successfully estimate the rigid transformation based on $\vec{M}$ via registration algorithms, $r_{in}$ needs to be greater than $\tau_2$. For example, in a common RANSAC pipeline, achieving $99.9\%$ confidence in the task of finding a subset with at least $3$ correct matches $\vec{M}$, with an inlier ratio $\tau_2 = 5\%$ requires at least $55258$ iterations. Theoretically, given  $r_{in} > \tau_2$, it is highly probable a reliable local registration algorithm would work, regardless of the robustifier. Therefore instead of using the local registration results to judge the quality of features, which would be both slow and not very straightforward, we define $\vec{M}$ with $r_{in} > \tau_2$ votes for a correct match of two fragments.

Each scene in the benchmark contains a set of fragments. Fragment pairs $\vec{P}$ and $\vec{Q}$ having an overlap above $30\%$ under the ground-truth alignment are considered to match.  Together they form the set of fragment pairs $\vec{S} = \{(\vec{P}, \vec{Q})\} $ that are used in evaluations. The quality of features is measured by the recall $R$ of fragment pairs matched in $\vec{S}$:

\begin{equation}
R = \frac{1}{|\vec{S}|} \sum\limits_{i=1}^{|\vec{S}|} \mathbbm{1} \Bigg( r_{in}\big(\vec{S}_i=(\vec{P}_i,\vec{Q}_i)\big) > \tau_2 \Bigg)
\end{equation}
\begin{table*}[t!]
  \centering
  \caption{Our results on the rotated 3DMatch benchmark. \textit{Red Kitchen} data is from 7-scenes~\cite{shotton2013scene} and the rest imported from SUN3D~\cite{xiao2013sun3d}.}
  \resizebox{\textwidth}{!}{
    \begin{tabular}{lccccccccc}
    \toprule
          & Spin Image~\cite{spin} & SHOT~\cite{shot}  & FPFH~\cite{fpfh}  & 
          3DMatch~\cite{zeng20163dmatch} &
          CGF~\cite{Khoury_2017_ICCV} &
          PPFNet~\cite{ppfnet} &
          FoldNet~\cite{foldingnet} &
          Ours &
          Ours-5K \\
    \midrule
    Kitchen & 0.1779 & 0.1779 & 0.2905 & 0.0040 & 0.4466 & 0.0020 & 0.0178& 0.7352 & \textbf{0.7885} \\
    Home 1 & 0.4487 & 0.3526 & 0.5897 & 0.0128 & 0.6667 & 0.0000 & 0.0321    & 0.7692 & \textbf{0.7821} \\
    Home 2 & 0.3413 & 0.3365 & 0.4712 & 0.0337 & 0.5288 & 0.0144 & 0.0337 & 0.6202 & \textbf{0.6442} \\
    Hotel 1 & 0.1814 & 0.2168 & 0.3009 & 0.0044 & 0.4425 & 0.0044 & 0.0133 & 0.6637 & \textbf{0.6770} \\
    Hotel 2 & 0.1731 & 0.2404 & 0.2981 & 0.0000     & 0.4423 & 0.0000 & 0.0096    & 0.6058 & \textbf{0.6923} \\
    Hotel 3 & 0.3148 & 0.3333 & 0.5185 & 0.0096 & 0.6296 & 0.0000 & 0.0370    & 0.9259 & \textbf{0.9630} \\
    Study & 0.0582 & 0.0822 & 0.1575 & 0.0000     & 0.4178 & 0.0000  & 0.0171   & 0.5616 & \textbf{0.6267} \\
    MIT Lab & 0.1169 & 0.1299 & 0.2857 & 0.0260 & 0.4156 & 0.0000 & 0.0260    & 0.6104 & \textbf{0.6753} \\
    \midrule
    Average & 0.2265 & 0.2337 & 0.3640 & 0.0113 & 0.4987 & 0.0026 & 0.0233 & 0.6865 & \textbf{0.7311} \\
    \bottomrule
    \end{tabular}%
  \label{tab:rot3dmatchbenchmark}%
  }
\end{table*}%
\subsection{Results} 
\paragraph{Feature quality evaluation}
We first compare the performance of our features against the well-accepted works on the $3$DMatch benchmark with $\tau_1 = 10$ cm and $\tau_2=5\%$. Tab.~~\ref{tab:3dmatchbenchmark} tabulates the findings. The methods selected for comparison comprise $3$ handcrafted features (Spin Images~\cite{spin}, SHOT~\cite{shot}, FPFH~\cite{fpfh}) and $4$ state-of-the-art deep learning based methods ($3$DMatch~\cite{zeng20163dmatch}, CGF~\cite{Khoury_2017_ICCV}, PPFNet~\cite{ppfnet}, FoldingNet~\cite{foldingnet}). Note that FoldingNet has never been tested on local descriptor extraction before. It is apparent that, overall, our PPF-FoldNet could match far more fragment pairs in comparison to the other methods, except for scenes \textit{Kitchen} and \textit{Home}, where PPFNet and $3$DMatch achieve a higher recall respectively. In all the other cases, PPF-FoldNet outperforms the state of the art by a large margin, $>9\%$ on average. PPF-FoldNet has a recall of $68.04\%$ when using $2$K sample points (the same as PPFNet), while PPFNet remains on $62.32\%$. Moreover, because PPF-FoldNet has no memory bottleneck, it can achieve an additonal $3\%$ improvement in comparison with the $2$K version, when 5K points are used. Interestingly, FPFH is also constructed from a type of PPF features~\cite{fpfh}, but in a form of manual histogram summarization. Compared to FPFH, PPF-FoldNet has $32.15\%$  and $35.93\%$ higher recall using $2$K and $5$K points respectively. It demonstrates the unprecedented strength of our advanced method in compressing the PPFs. In order to optimally reconstruct PPFs in the decoder, the network forces the bottleneck codeword to be compact as well as distilling the most critical and distinctive information in PPFs. 

To illustrate that parameters in the evaluation metric are not tuned for our own good, we also repeat the experiments with different $\tau_1$ and $\tau_2$ values. The results are shown in Fig.~\ref{subfig:inlierthresh} and Fig.~\ref{subfig:distancethresh}. In Fig.~\ref{subfig:inlierthresh}, $\tau_1$ is fixed at $10$ cm, $\tau_2$ increases gradually from $1\%$ to $20\%$. When $\tau_2$ is above $4\%$, PPF-FoldNet always has a higher recall than the other methods. Below $4\%$, some other methods may obtain a higher recall but this is too strict for most of the registration algorithms anyway. It is further noteworthy that when $\tau_2$ is set to $20\%$, the point where PPF-FoldNet still gets a recall above $20\%$, the performance of the other methods falls below $5\%$. This justifies that PPF-FoldNet is capable of generating many more sets of matching points with a high inlier ratio $r_{in}$. This offers a tremendous benefit for the registration algorithms. In Fig.~\ref{subfig:distancethresh}, $\tau_2$ is fixed at $5\%$, $\tau_1$ increases gradually from $0$ cm to $20$ cm. When $\tau_1$ is smaller than $12$ cm, PPF-FoldNet consistently generates higher recall. This finding indicates that PPF-FoldNet matches more point pairs with a small distance error in the Euclidean space, which could efficiently decrease the rigid transformation estimation errors.

\begin{figure*}[t!]
\centering
\subfigure[]{
\includegraphics[width=0.232505\linewidth, clip=true]{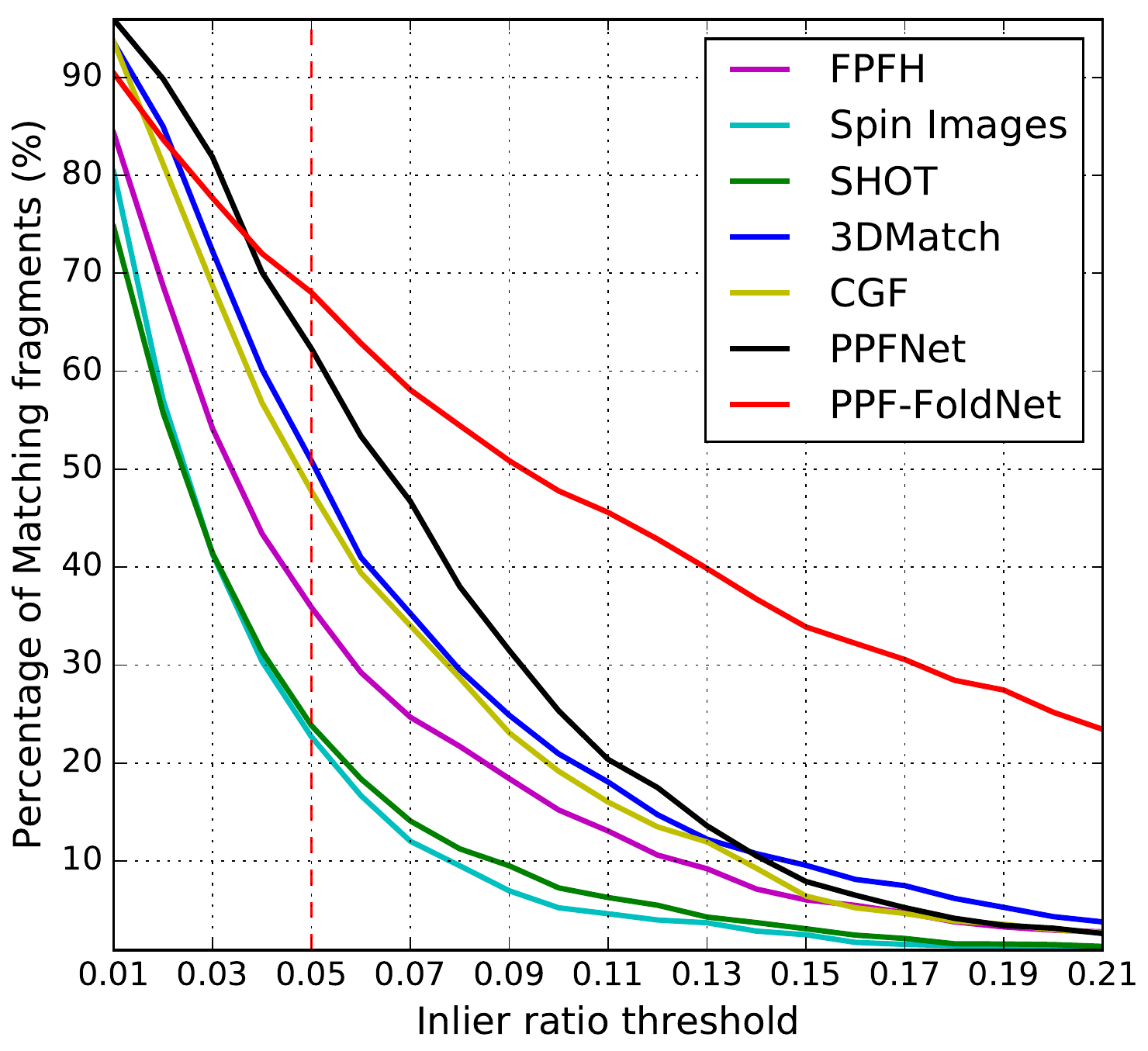}
\label{subfig:inlierthresh}}
\subfigure[]{
\includegraphics[width=0.2315\linewidth, clip=true]{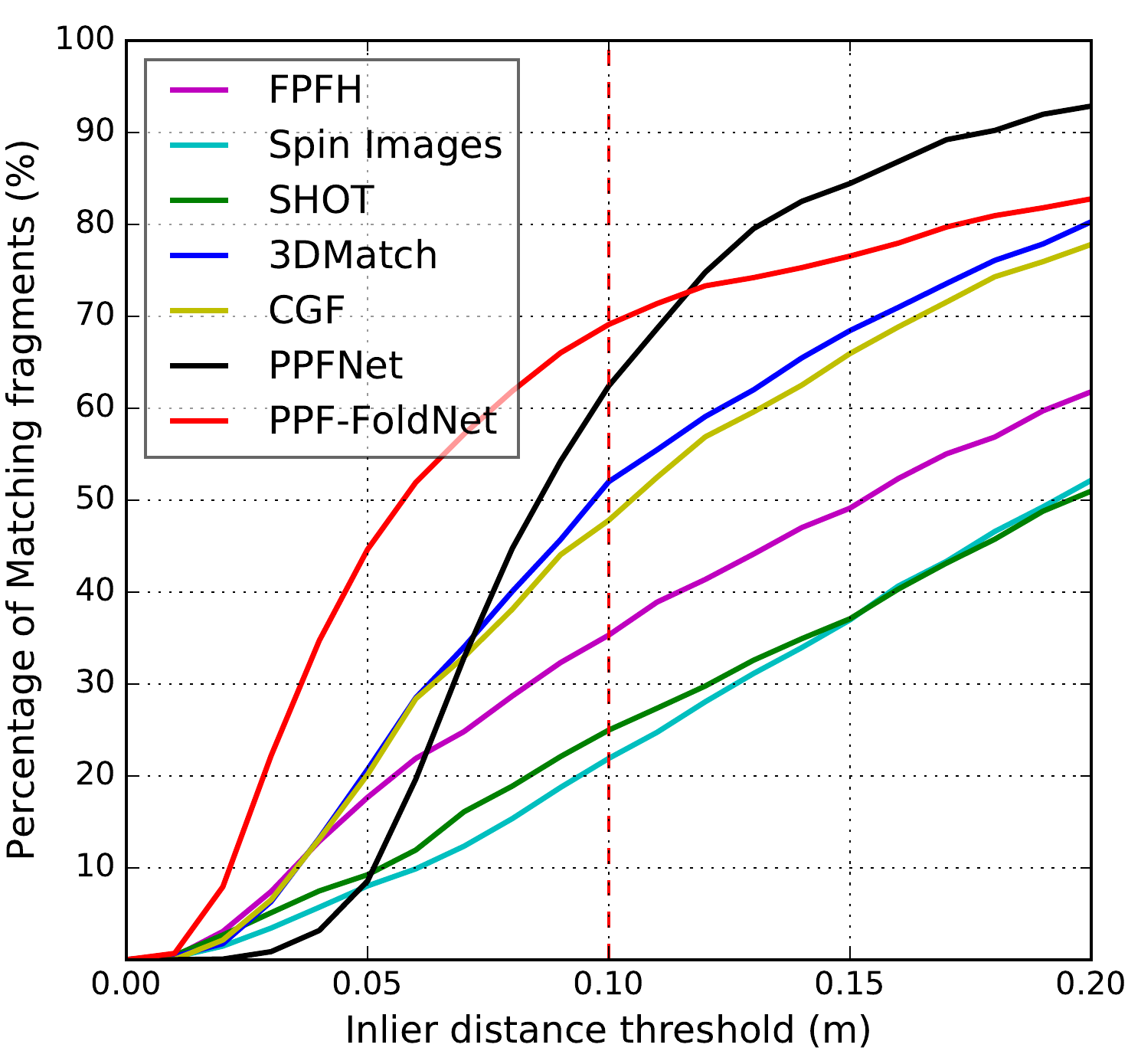}
\label{subfig:distancethresh}}
\subfigure[]{
\includegraphics[width=0.2325\linewidth, clip=true]{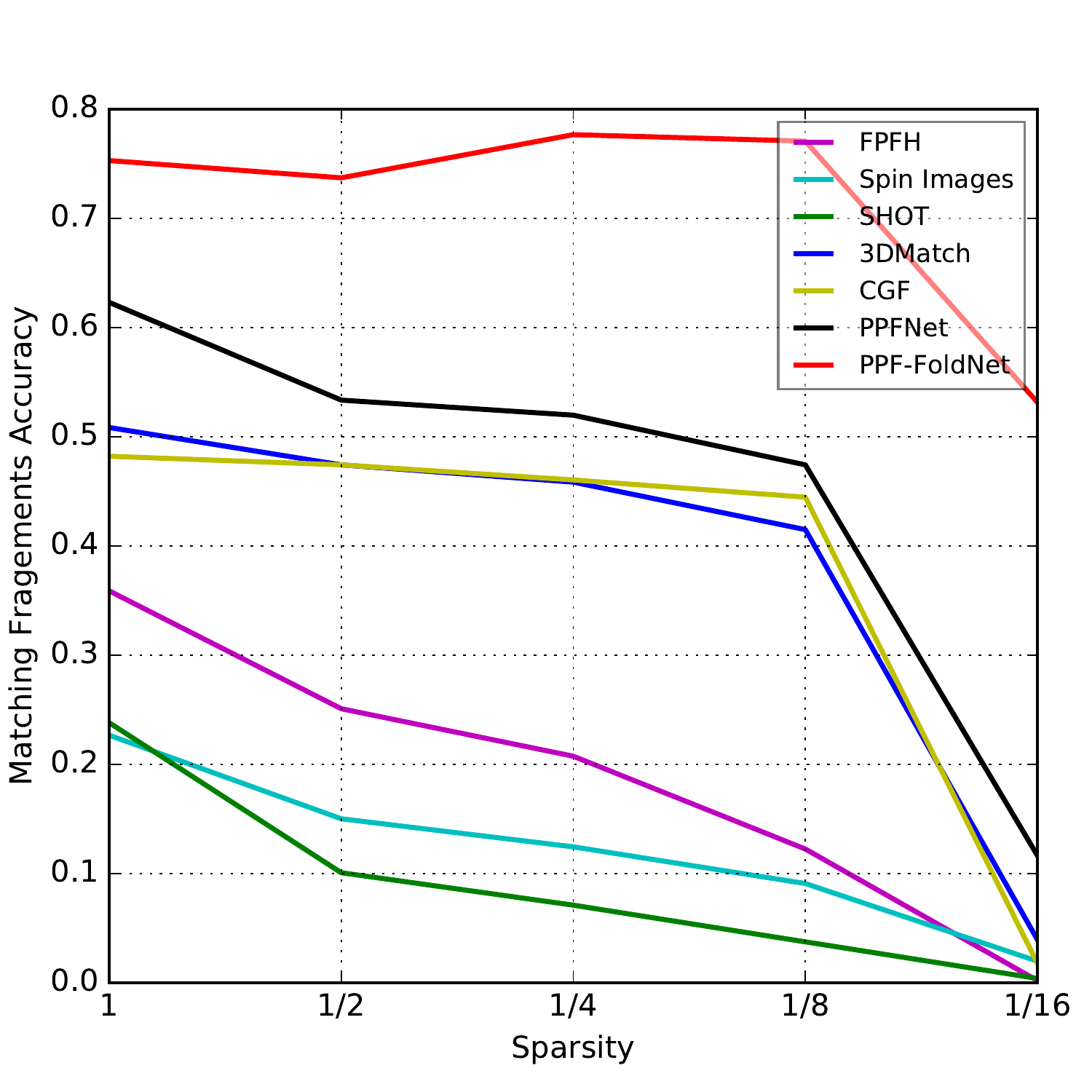}
\label{subfig:sparsity}}
\subfigure[]{
\includegraphics[width=0.23305\linewidth, clip=true]{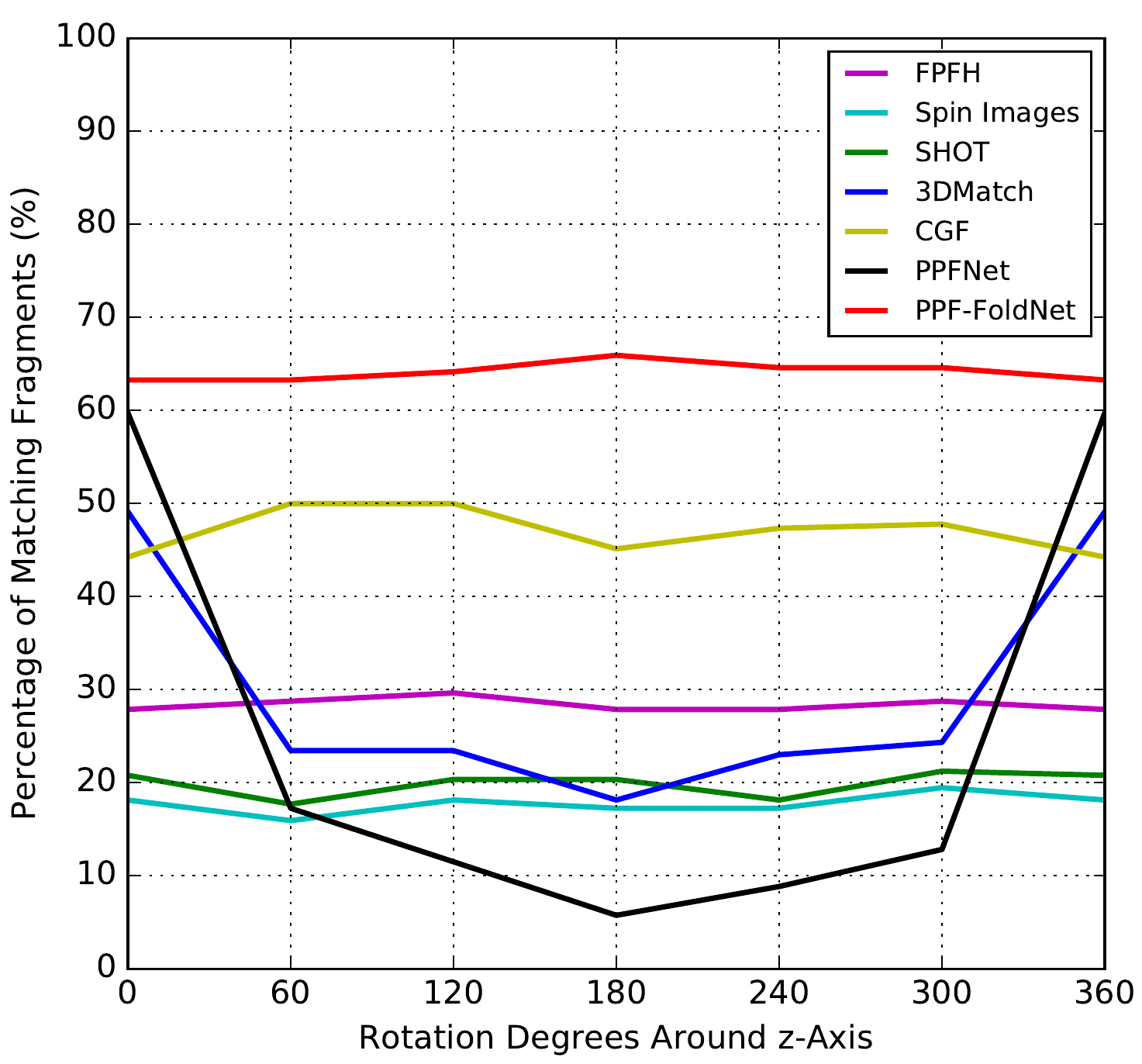}
\label{subfig:zrotation}}
\caption{Evaluations on 3DMatch benchmark: \textbf{(a)} Results of different methods under varying inlier ratio threshold \textbf{(b)} Results of different methods under varying point distance threshold \textbf{(c)} Evaluating robustness again point density
\textbf{(d)} Evaluations against rotations around $z$-axis}
\label{fig:evaluations}
\end{figure*}

\paragraph{Tests on rotation invariance}
To demonstrate the outstanding rotation-invariance property of PPF-FoldNet, we take random fragments out of the evaluation set and gradually rotate them around the $z$-axis from $60^{\circ}$ to $360^{\circ}$ in steps of $60^{\circ}$. The matching results are shown in Fig.~\ref{subfig:zrotation}. As expected, both PPFNet and $3$DMatch perform poorly as they operate on rotation-variant input representations. Hand crafted features or CGF also demonstrate robustness to rotations thanks to the reliance on the local reference frame (LRF). However, PPF-FoldNet stands out as the best approach with a much higher recall which furthermore does not require computation of local reference frames.

To further test how those methods perform under situations with severe rotations, we rotate all the fragments in $3$DMatch benchmark with randomly sampled axes and angles over the whole rotation space, and introduce a new benchmark -- \textit{Rotated $3$DMatch Benchmark}. The same evaluation is once again conducted on this new benchmark. Keeping the accuracy evaluations identical, our results are shown in Tab.~\ref{tab:rot3dmatchbenchmark}. $3$DMatch and PPFNet completely failed under this new benchmark because of the variables introduced by large rotations. Once again, PPF-FoldNet, surpasses all other methods, achieving the best results in all the scenes, predominates the runner-up CGF by large margins of $18.78\%$ and $23.24\%$ respectively when using $2$K and $5$K points.
\paragraph{Sparsity evaluation}
Thanks to the sparse representation of our input, PPF-FoldNet is also robust in respect of the changes in point cloud density and noise. Fig.\ref{subfig:sparsity} shows the performance of different methods when we gradually decrease the points in the fragment from $100\%$ to only $6.25\%$. We can see that PPF-FoldNet is least affected by the decrease in point cloud density. In particular, when only $6.25\%$ points are left in the fragments, the recall for PPF-FoldNet is still greater than $50\%$ while PPFNet remains around $12\%$ and the other methods almost fail. The results of PPFNet and PPF-FoldNet together demonstrate that PPF representation offers more robustness in respect of point densities, which is a common problem existing in many point cloud representations.    
\begin{table}[t!]
  \centering
  \setlength{\tabcolsep}{3pt}
  \caption{Accuracy comparison of different PPF representations.}
    \begin{tabular}{lccccccccc}
          & \multicolumn{1}{l}{Kitchen} & \multicolumn{1}{l}{Home 1} & \multicolumn{1}{l}{Home 2} & \multicolumn{1}{l}{Hotel 1} & \multicolumn{1}{l}{Hotel 2} & \multicolumn{1}{l}{Hotel 3} & \multicolumn{1}{l}{Study} & \multicolumn{1}{l}{MIT Lab} & \multicolumn{1}{l}{Average} \\
    \midrule
    PPFH  & 0.534 & 0.622 & 0.486 & 0.341 & 0.346 & 0.574 & 0.233 & 0.351 & 0.436 \\
     Bobkov1     & 0.514 & 0.635 & 0.510 & 0.403 & 0.433 & 0.611 & 0.281 & 0.481 & 0.483 \\
    Our-PPF & 0.506 & 0.635 & 0.495 & 0.350 & 0.385 & 0.667 & 0.267 & 0.403 & 0.463 \\
    \end{tabular}%
  \label{tab:ppf}%
\end{table}%
\paragraph{Can PPF-FoldNet operate with different PPF constructions?} We now study 3 identical networks, trained for 3 different PPF formulations: ours, PPFH (the PPF used in FPFH~\cite{fpfh}) and Bobkov1 et al.~\cite{bobkov2018noise}. The latter has an added component of \textit{occupancy ratio} based on grid space. We use a subset of $3$DMatch benchmark to train all networks for a fixed number of iterations and test on the rotated fragments. Tab.~\ref{tab:ppf} presents our findings: all features perform similarly. Thus, we do not claim the superiority of our PPF representation, but stress that it is simple, easy to compute, intuitive and easy to visualize. Due to the voxelization, \textit{Bobkov1} is significantly slower than the other methods, and due to the lack of an LRF, our PPF is faster than \textit{PPFH}'s. Using stronger pair primitives would favor PPF-FoldNet as our network is agnostic to the PPF construction.
\paragraph{Runtime} We run our algorithm on a machine loaded with NVIDIA TitanX Pascal GPU and an Intel Core i$7$ $3.2$GHz CPU. On this hardware, computing features of an entire fragment via FPFH~\cite{fpfh} takes $31.678$ seconds, whereas PPF-FoldNet achieves a $10\times$ speed-up with $3.969$ seconds, despite having similar theoretical complexity. In particular, our input preparation for PPF extraction runs in $2.616$ seconds, and the inference in $1.353$. This is due to 1) PPF-FoldNet requiring only a single pass over the input, 2) our efficient network accelerated on GPU powered Tensorflow.
\insertimageStar{1}{matching_visuals_cropped.pdf}{Qualitative results of matching across different fragments and for different methods. When severe transformations are present, only hand-crafted algorithms, CGF and our method achieves satisfactory matches. However, for PPF-FoldNet, the number of matches are significantly larger.}{fig:matching}{t!}
\insertimageStar{1}{evolution_cropped.pdf}{Visualizing signatures of reconstructed PPFs. As the training converges, the reconstructed PPF signatures become closer to the original signatures. Our network reveals the underlying structure of the PPF space.}{fig:evolution}{t!}
\subsection{Qualitative Evaluations} 
\label{sec:qualeval}
\paragraph{Visualizing the matching result} From the quantitative results, PPF-FoldNet is expected to have better and more correct feature matches, especially when arbitrary rigid transformations are applied. To show this visually, we run different methods and ours across several fragments undergoing varying rotations. In Fig.~\ref{fig:matching} we show the matching regions, over uniformly sampled~\cite{birdal2017sampling} keypoints on these fragments. It is clear that our algorithm performs the best among all others in discovering the most correct correspondences.
\paragraph{Monitoring network evolution} As our network is interpretable, it is tempting to qualitatively analyze the progress of the network. To do that we record the PPF reconstruction output at discrete time steps and visualize the PPFs as explained in §~\ref{sec:ppf}. Fig~\ref{fig:evolution} shows such a visualization for different local patches. First, thanks to the representation power, our network achieves high fidelity recovery of PPFs. Note that even though the network starts from a random initialization, it can quickly recover a desired point pair feature set, even after only a small number of iterations. Next, for similar local patches (top and bottom rows), the reconstructions are similar, while for different ones, different. 

\paragraph{Visualizing the latent space}
We now attempt to visualize the learned latent space and assess whether the embedding is semantically meaningful. To do so, we compute a set of codewords and the associated PPF signatures. We then run the Barnes Hut T-SNE algorithm~\cite{tsne,barneshut} on the extracted codewords and form a two-dimensional embedding space, as shown in Fig.~\ref{fig:tsne}. At each $2$D location we paint the PPF signature and thereby illustrate the distribution of PPFs along the manifold. We also plot the original patches which generated the codewords and their corresponding signatures as cutouts. Presented in Fig.~\ref{fig:tsne}, whenever the patches are geometrically and semantically close, the computed descriptors are close, and whenever the patches have less physical similarity, they are embedded into different parts of the space. This provides insight into the good performance and meaningfulness in the relationships our network could learn.
\insertimageStar{1}{tsne_cropped.pdf}{Visualization of the latent space of codewords, associated PPFs and samples of clustered local $3$D patches using TSNE~\cite{tsne,barneshut}.}{fig:tsne}{t!}
\insertimageStar{1}{feature_vis_cropped.pdf}{Visualization of the latent feature space on fragments fused from different views. To map each feature to a color on the fragment, we use TSNE embedding~\cite{tsne}. We reduce the dimension to three and associate each low dimensional vector with an RGB color.}{fig:rotation}{t!}
In a further experiment, we extract a feature at each location of the point cloud. Then, we reduce the dimension of the latent space to three via TSNE~\cite{tsne}, and colorize each point by the reduced feature vector. Qualitatively justifying the repeatibility of our descriptors, the outcome is shown in Fig.~\ref{fig:rotation}. Note that, descriptors extracted by the proposed approach lead to similar colors in matching regions among the different fragments. 
\section{Concluding Remarks}
We have presented PPF-FoldNet, an unsupervised, rotation invariant, low complexity, intuitive and interpretable network in order to learn $3$D local features solely from point geometry information. Our network is built upon its contemporary ancestors, PointNet, FoldingNet $\&$ PPFNet and it inherits best attributes of all. Despite being rotation invariant, we have outperformed all the state-of-the-art descriptors, including supervised ones even in the standard benchmarks under challenging conditions with varying point density.
We believe PPF-FoldNet offers a promising new approach to the important problem of unsupervised $3$D local feature extraction and see this as an important step towards unsupervised revolution in $3$D vision.

Our architecture can be extended in many directions. One of the most promising of those would be to adapt our features towards tasks like classification and object pose estimation. We conclude with the hypothesis that the generalizability in our unsupervised network should transfer easily into solving other similar problems, giving rise to an open application domain.

\clearpage

\bibliographystyle{splncs04}

\begin{thebibliography}{10}
\providecommand{\url}[1]{\texttt{#1}}
\providecommand{\urlprefix}{URL }
\providecommand{\doi}[1]{https://doi.org/#1}

\bibitem{abadi2016tensorflow}
Abadi, M., Agarwal, A., Barham, P., Brevdo, E., Chen, Z., Citro, C., Corrado,
  G.S., Davis, A., Dean, J., Devin, M., et~al.: Tensorflow: Large-scale machine
  learning on heterogeneous distributed systems. arXiv preprint
  arXiv:1603.04467  (2016)

\bibitem{achlioptas2017}
{Achlioptas}, P., {Diamanti}, O., {Mitliagkas}, I., {Guibas}, L.: {Learning
  Representations and Generative Models for 3D Point Clouds}. In: International
  Conference on Machine Learning (ICML) (2018)

\bibitem{birdal3dv2015}
Birdal, T., Ilic, S.: Point pair features based object detection and pose
  estimation revisited. In: 3D Vision. pp. 527--535. IEEE (2015)

\bibitem{birdal2017cad}
Birdal, T., Ilic, S.: Cad priors for accurate and flexible instance
  reconstruction. In: Computer Vision (ICCV), 2017 IEEE International
  Conference on. pp. 133--142. IEEE (2017)

\bibitem{birdal2017sampling}
Birdal, T., Ilic, S.: A point sampling algorithm for 3d matching of irregular
  geometries. In: International Conference on Intelligent Robots and Systems
  (IROS 2017). IEEE (2017)

\bibitem{bobkov2018noise}
Bobkov, D., Chen, S., Jian, R., Iqbal, M.Z., Steinbach, E.: Noise-resistant
  deep learning for object classification in three-dimensional point clouds
  using a point pair descriptor. IEEE Robotics and Automation Letters
  \textbf{3}(2),  865--872 (2018)

\bibitem{cao20173d}
Cao, Z., Huang, Q., Karthik, R.: 3d object classification via spherical
  projections. In: 3D Vision (3DV), 2017 International Conference on. pp.
  566--574. IEEE (2017)

\bibitem{ppfnet}
Deng, H., Birdal, T., Ilic, S.: Ppfnet: Global context aware local features for
  robust 3d point matching. Computer Vision and Pattern Recognition (CVPR).
  IEEE  \textbf{1} (2018)

\bibitem{Elbaz_2017_CVPR}
Elbaz, G., Avraham, T., Fischer, A.: 3d point cloud registration for
  localization using a deep neural network auto-encoder. In: The IEEE
  Conference on Computer Vision and Pattern Recognition (CVPR) (July 2017)

\bibitem{rops}
Guo, Y., Sohel, F.A., Bennamoun, M., Wan, J., Lu, M.: Rops: A local feature
  descriptor for 3d rigid objects based on rotational projection statistics.
  In: Communications, Signal Processing, and their Applications (ICCSPA), 2013
  1st International Conference on. pp.~1--6. IEEE (2013)

\bibitem{hackel2017isprs}
Hackel, T., Savinov, N., Ladicky, L., Wegner, J.D., Schindler, K., Pollefeys,
  M.: {SEMANTIC3D.NET: A new large-scale point cloud classification benchmark}.
  In: ISPRS Annals of the Photogrammetry, Remote Sensing and Spatial
  Information Sciences. vol. IV-1-W1, pp. 91--98 (2017)

\bibitem{halber2016structured}
Halber, M., Funkhouser, T.: Fine-to-coarse global registration of rgb-d scans.
  In: Proceedings of the IEEE Conference on Computer Vision and Pattern
  Recognition (CVPR) (2017)

\bibitem{resnet}
He, K., Zhang, X., Ren, S., Sun, J.: Deep residual learning for image
  recognition. In: Proceedings of the IEEE conference on computer vision and
  pattern recognition. pp. 770--778 (2016)

\bibitem{Hoppe1992}
Hoppe, H., DeRose, T., Duchamp, T., McDonald, J., Stuetzle, W.: Surface
  reconstruction from unorganized points, vol.~26.2. ACM (1992)

\bibitem{huangKCCKY17}
Huang, H., Kalogerakis, E., Chaudhuri, S., Ceylan, D., Kim, V.G., Yumer, E.:
  Learning local shape descriptors from part correspondences with multiview
  convolutional networks. ACM Transactions on Graphics  \textbf{37}(1) (2017)

\bibitem{spin}
Johnson, A.E., Hebert, M.: Using spin images for efficient object recognition
  in cluttered 3d scenes. IEEE Transactions on pattern analysis and machine
  intelligence  \textbf{21}(5),  433--449 (1999)

\bibitem{kehl2016deep}
Kehl, W., Milletari, F., Tombari, F., Ilic, S., Navab, N.: Deep learning of
  local rgb-d patches for 3d object detection and 6d pose estimation. In:
  European Conference on Computer Vision. pp. 205--220. Springer (2016)

\bibitem{Khoury_2017_ICCV}
Khoury, M., Zhou, Q.Y., Koltun, V.: Learning compact geometric features. In:
  The IEEE International Conference on Computer Vision (ICCV) (Oct 2017)

\bibitem{kingma2014adam}
Kinga, D., Adam, J.B.: A method for stochastic optimization. In: International
  Conference on Learning Representations (ICLR) (2015)

\bibitem{kdtreenet}
Klokov, R., Lempitsky, V.: Escape from cells: Deep kd-networks for the
  recognition of 3d point cloud models. In: 2017 IEEE International Conference
  on Computer Vision (ICCV). pp. 863--872. IEEE (2017)

\bibitem{imagenet}
Krizhevsky, A., Sutskever, I., Hinton, G.E.: Imagenet classification with deep
  convolutional neural networks. In: Pereira, F., Burges, C.J.C., Bottou, L.,
  Weinberger, K.Q. (eds.) Advances in Neural Information Processing Systems 25,
  pp. 1097--1105. Curran Associates, Inc. (2012)

\bibitem{lai2014unsupervised}
Lai, K., Bo, L., Fox, D.: Unsupervised feature learning for 3d scene labeling.
  In: Robotics and Automation (ICRA), 2014 IEEE International Conference on.
  pp. 3050--3057. IEEE (2014)

\bibitem{lowe1999object}
Lowe, D.G.: Object recognition from local scale-invariant features. In:
  Computer vision, 1999. The proceedings of the seventh IEEE international
  conference on. vol.~2, pp. 1150--1157. Ieee (1999)

\bibitem{tsne}
Maaten, L.v.d., Hinton, G.: Visualizing data using t-sne. Journal of machine
  learning research  \textbf{9}(Nov),  2579--2605 (2008)

\bibitem{manessi2017dynamic}
Manessi, F., Rozza, A., Manzo, M.: Dynamic graph convolutional networks. arXiv
  preprint arXiv:1704.06199  (2017)

\bibitem{maturana2015voxnet}
Maturana, D., Scherer, S.: Voxnet: A 3d convolutional neural network for
  real-time object recognition. In: Intelligent Robots and Systems (IROS), 2015
  IEEE/RSJ International Conference on. pp. 922--928. IEEE (2015)

\bibitem{mur2015orb}
Mur-Artal, R., Montiel, J.M.M., Tardos, J.D.: Orb-slam: a versatile and
  accurate monocular slam system. IEEE Transactions on Robotics
  \textbf{31}(5),  1147--1163 (2015)

\bibitem{Noh_2017_ICCV}
Noh, H., Araujo, A., Sim, J., Weyand, T., Han, B.: Large-scale image retrieval
  with attentive deep local features. In: The IEEE International Conference on
  Computer Vision (ICCV) (Oct 2017)

\bibitem{qi2017frustum}
Qi, C.R., Liu, W., Wu, C., Su, H., Guibas, L.J.: Frustum pointnets for 3d
  object detection from rgb-d data. In: The IEEE Conference on Computer Vision
  and Pattern Recognition (CVPR) (June 2018)

\bibitem{qi2016pointnet}
Qi, C.R., Su, H., Mo, K., Guibas, L.J.: Pointnet: Deep learning on point sets
  for 3d classification and segmentation. Proc. Computer Vision and Pattern
  Recognition (CVPR), IEEE  \textbf{1}(2), ~4 (2017)

\bibitem{qi2017pointnet++}
Qi, C.R., Yi, L., Su, H., Guibas, L.J.: Pointnet++: Deep hierarchical feature
  learning on point sets in a metric space. In: Guyon, I., Luxburg, U.V.,
  Bengio, S., Wallach, H., Fergus, R., Vishwanathan, S., Garnett, R. (eds.)
  Advances in Neural Information Processing Systems 30, pp. 5099--5108. Curran
  Associates, Inc. (2017)

\bibitem{Qi_2017_ICCV}
Qi, X., Liao, R., Jia, J., Fidler, S., Urtasun, R.: 3d graph neural networks
  for rgbd semantic segmentation. In: The IEEE International Conference on
  Computer Vision (ICCV) (Oct 2017)

\bibitem{octnet}
Riegler, G., Ulusoy, O., Geiger, A.: Octnet: Learning deep 3d representations
  at high resolutions. In: IEEE Conf. on Computer Vision and Pattern
  Recognition (CVPR) (Jul 2017)

\bibitem{fpfh}
Rusu, R.B., Blodow, N., Beetz, M.: Fast point feature histograms (fpfh) for 3d
  registration. In: Robotics and Automation, 2009. ICRA'09. IEEE International
  Conference on. pp. 3212--3217. IEEE (2009)

\bibitem{shot}
Salti, S., Tombari, F., Di~Stefano, L.: Shot: Unique signatures of histograms
  for surface and texture description. Computer Vision and Image Understanding
  \textbf{125},  251--264 (2014)

\bibitem{shen2017neighbors}
{Shen}, Y., {Feng}, C., {Yang}, Y., {Tian}, D.: {Neighbors Do Help: Deeply
  Exploiting Local Structures of Point Clouds}. ArXiv e-prints  (Dec 2017)

\bibitem{shotton2013scene}
Shotton, J., Glocker, B., Zach, C., Izadi, S., Criminisi, A., Fitzgibbon, A.:
  Scene coordinate regression forests for camera relocalization in rgb-d
  images. In: Proceedings of the IEEE Conference on Computer Vision and Pattern
  Recognition. pp. 2930--2937 (2013)

\bibitem{octgennet}
Tatarchenko, M., Dosovitskiy, A., Brox, T.: Octree generating networks:
  Efficient convolutional architectures for high-resolution 3d outputs. In:
  IEEE International Conference on Computer Vision (ICCV) (2017)

\bibitem{Tat18}
Tatarchenko, M., Park, J., Koltun, V., Zhou, Q.Y.: Tangent convolutions for
  dense prediction in 3d. In: IEEE Conference on Computer Vision and Pattern
  Recognition (CVPR) (2018)

\bibitem{usc}
Tombari, F., Salti, S., Di~Stefano, L.: Unique shape context for 3d data
  description. In: Proceedings of the ACM workshop on 3D object retrieval. pp.
  57--62. ACM (2010)

\bibitem{valentin2016learning}
Valentin, J., Dai, A., Nie{\ss}ner, M., Kohli, P., Torr, P., Izadi, S., Keskin,
  C.: Learning to navigate the energy landscape. In: 3D Vision (3DV), 2016
  Fourth International Conference on. pp. 323--332. IEEE (2016)

\bibitem{barneshut}
{van der Maaten}, L.: {Barnes-Hut-SNE}. ArXiv e-prints  (Jan 2013)

\bibitem{ocnn}
Wang, P.S., Liu, Y., Guo, Y.X., Sun, C.Y., Tong, X.: O-cnn: Octree-based
  convolutional neural networks for 3d shape analysis. ACM Transactions on
  Graphics (TOG)  \textbf{36}(4), ~72 (2017)

\bibitem{graphnetwork}
{Wang}, Y., {Sun}, Y., {Liu}, Z., {Sarma}, S.E., {Bronstein}, M.M., {Solomon},
  J.M.: {Dynamic Graph CNN for Learning on Point Clouds}. ArXiv e-prints  (Jan
  2018)

\bibitem{wu2016learning}
Wu, J., Zhang, C., Xue, T., Freeman, B., Tenenbaum, J.: Learning a
  probabilistic latent space of object shapes via 3d generative-adversarial
  modeling. In: Advances in Neural Information Processing Systems. pp. 82--90
  (2016)

\bibitem{wu20153d}
Wu, Z., Song, S., Khosla, A., Yu, F., Zhang, L., Tang, X., Xiao, J.: 3d
  shapenets: A deep representation for volumetric shapes. In: Proceedings of
  the IEEE conference on computer vision and pattern recognition. pp.
  1912--1920 (2015)

\bibitem{xiao2013sun3d}
Xiao, J., Owens, A., Torralba, A.: Sun3d: A database of big spaces
  reconstructed using sfm and object labels. In: Proceedings of the IEEE
  International Conference on Computer Vision. pp. 1625--1632 (2013)

\bibitem{foldingnet}
Yang, Y., Feng, C., Shen, Y., Tian, D.: Foldingnet: Point cloud auto-encoder
  via deep grid deformation. In: The IEEE Conference on Computer Vision and
  Pattern Recognition (CVPR) (June 2018)

\bibitem{yi2016lift}
Yi, K.M., Trulls, E., Lepetit, V., Fua, P.: Lift: Learned invariant feature
  transform. In: European Conference on Computer Vision. pp. 467--483. Springer
  (2016)

\bibitem{yu2018pu}
Yu, L., Li, X., Fu, C.W., Cohen-Or, D., Heng, P.A.: Pu-net: Point cloud
  upsampling network. In: The IEEE Conference on Computer Vision and Pattern
  Recognition (CVPR) (June 2018)

\bibitem{zeng20163dmatch}
Zeng, A., Song, S., Nie{\ss}ner, M., Fisher, M., Xiao, J., Funkhouser, T.:
  3dmatch: Learning local geometric descriptors from rgb-d reconstructions. In:
  CVPR (2017)

\end{thebibliography}

\setcounter{section}{0}
\renewcommand\thesection{\Alph{section}}
\newcommand{\suppsection}{\subsection}
\clearpage
\section{Appendix}
\subsection{Evaluations on Generalizability}
Unfortunately, state-of-the-art methods for learning 3D features rely heavily on the availability of extensively annotated data - such as ground truth matches between the pairs. In 3DMatch, CGF and PPFNet, contrastive, triplet and N-tuple losses are used respectively. Such supervision prevents these methods from immediately extending to different datasets, without fine-tuning. It might occasionally be prohibitive to even obtain labeled data for new datasets.

\begin{wrapfigure}[16]{r}{0.5\textwidth}
	\centering\vspace{-37.5pt}
			\includegraphics[width=0.5\textwidth, clip=true]{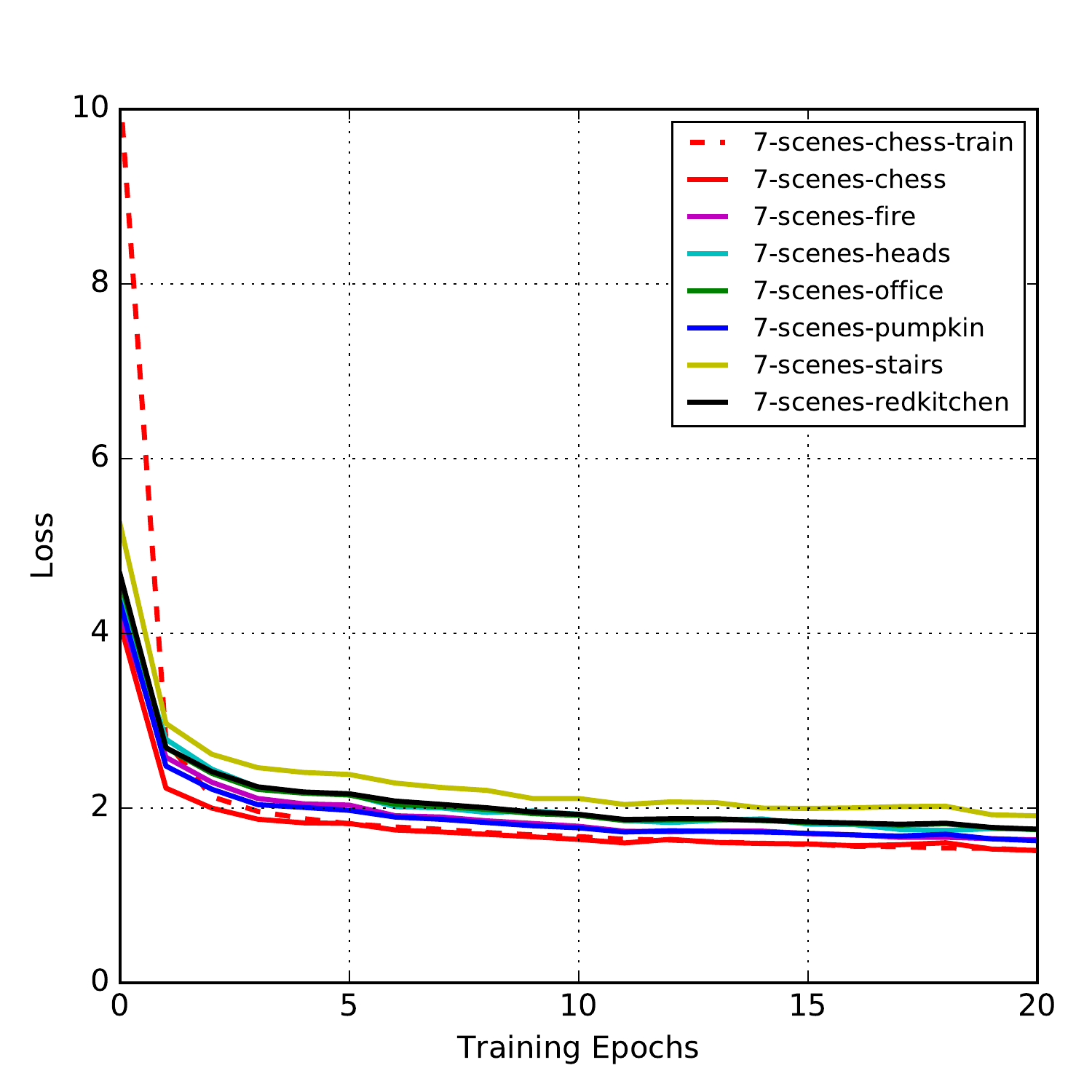}\vspace{-3mm}
        	\caption{Generalizability Test}
	        \label{fig:general}
\end{wrapfigure}
This is different for PPF-FoldNet, where we learn a completely unsupervised representation. Thanks to the novel auto-encoder, we can operate on any available dataset without requiring auxiliary label information. Therefore, we are motivated to believe that PPF-FoldNet would generalize better to unseen data, even with a small subset of unlabeled data being available. 

To test the aforementioned hypothesis, we propose an experimentation, where PPF-FoldNet is trained on a small portion of scenes and tested on multiple different ones. We divide the \textit{Chess} scene from 7-scenes dataset into train and test splits, train PPF-FoldNet from scratch, and measure the loss on the test data including \textit{Chess} and other six scenes as well. Note that the remaining six scenes do not contribute to training. For all datasets, we plot the loss curves among iterations in Fig.~\ref{fig:general}. The dashed line stands for average loss of training data for each epoch, and the other curves depict the average loss of test data from each scene respectively. As the training proceeds, the losses for all 7 datasets decrease following similar trend. Achieving such residual values validates that PPF-FoldNet can generalize to unseen input.

\subsection{Additional Visualizations of Matching}
Fig.~\ref{fig:matching2} presents further qualitative analysis involving matching of rotated fragments and across all evaluated methods.

\insertimageStar{1}{matching_visuals2_cropped.pdf}{Qualitative results of matching across different fragments and for different methods. When transformations involving rotations are present, only hand-crafted algorithms, CGF and our method achieves satisfactory matches. However, for our PPF-FoldNet, the number of matches are significantly larger.}{fig:matching2}{t!}

\end{document}